\documentclass[11pt]{article}
\usepackage[letterpaper,margin=1in]{geometry}
\usepackage{amsmath,amssymb,amsthm,mathtools,bm,booktabs,graphicx,microtype}
\usepackage[colorlinks=true,allcolors=blue]{hyperref}
\usepackage[nameinlink,capitalize]{cleveref}
\crefname{assumption}{Assumption}{Assumptions}
\Crefname{assumption}{Assumption}{Assumptions}
\newtheorem{theorem}{Theorem}[section]
\newtheorem{lemma}[theorem]{Lemma}
\newtheorem{proposition}[theorem]{Proposition}
\newtheorem{corollary}[theorem]{Corollary}
\newtheorem{remark}[theorem]{Remark}
\newtheorem{assumption}[theorem]{Assumption}
\newtheorem{definition}[theorem]{Definition}
\newcommand{\R}{\mathbb R}
\newcommand{\E}{\mathbb E}
\newcommand{\1}{\mathbf 1}
\newcommand{\norm}[1]{\lVert #1\rVert_2}
\newcommand{\cF}{\mathcal F}
\DeclareMathOperator{\sign}{sign}

\title{Stability and Generalization of  
Straight-Through Estimators for Training Two-Layer Quantized Neural Networks}
\author{
  Yiming Ying \\
  \small School of Mathematics and Statistics \\
  \small University of Sydney \\
  \small \texttt{yiming.ying@sydney.edu.au}
}
\date{}

\begin{document}
\maketitle

\begin{abstract}
We study the identity straight-through estimator (STE) for  training a two-layer binary-activation network with hinge loss from the perspective of Statistical Learning Theory (SLT). Our central question is whether algorithmic stability can explain the statistical generalization of the estimator produced by the discontinuous STE training rule. In the saturated-output regime, the zero-initialized samplewise STE recursion is exactly  the stochastic subgradient descent on the convex latent loss $(-yu^\top x)_+$. This representation makes a stability analysis possible. We derive an exact distance identity for two coupled updates and prove approximate non-expansiveness of the common-example map, with a quadratic defect only when the two latent margins straddle zero. We then obtain explicit $\ell_2$ on-average model-stability and generalization bounds, transferring stability isometrically from the latent vector to the full first-layer matrix. Combining stability with a standard optimization bound yields an explicit excess induced-risk guarantee and the rate $O(n^{-1/2})$ when $T=n^2$. Under margin separability, a complementary argument gives the optimal-order $O(R^2/(\gamma^2n))$ expected excess misclassification error for a randomized one-pass STE iterate and a corresponding majority-vote bound. 
\end{abstract}

\section{Introduction} A major driving force behind the recent progress of artificial intelligence is
deep learning, in which deep neural networks trained on large-scale datasets
learn increasingly complex representations. Deep neural networks have achieved
remarkable performance in image classification
\cite{he2016deep,krizhevsky2017imagenet}, speech recognition
\cite{xiong2017toward}, game playing \cite{silver2016mastering}, and natural
language processing \cite{vaswani2017attention}. These successes, however, are
often accompanied by models with very large numbers of parameters and
substantial computational and memory requirements.

The high cost of inference presents a serious obstacle to deploying deep neural
networks on resource-constrained devices, including smartphones, embedded
systems, and Internet-of-Things devices. Full-precision networks may require
considerable memory for storing their parameters and a large number of
floating-point operations for each prediction. Network quantization has
therefore emerged as an important approach for reducing inference cost. It
replaces full-precision weights or activations with low-precision
representations, thereby reducing memory usage, storage, communication
bandwidth, and arithmetic cost while seeking to preserve predictive
performance.

Training a quantized neural network nevertheless presents a fundamental
mathematical difficulty. Quantized activations make the displayed network loss
piecewise constant in the trainable weights, so its ordinary gradient vanishes
almost everywhere. Straight-through estimators (STEs), introduced by Bengio,
L\'eonard, and Courville \cite{bengio2013}, address this difficulty by replacing
the unavailable activation derivative with a surrogate during
back-propagation. STEs have become the backbone  of quantization-aware
training and have achieved substantial empirical success in influential
quantized architectures such as BinaryNet
\cite{courbariaux2016binarynet}, XNOR-Net
\cite{rastegari2016xnornet}, and DoReFa-Net \cite{zhou2016dorefa}.

Despite this empirical success and widespread practical use, the theoretical
understanding of the heuristic STE training remains limited. Existing studies have focused
primarily on optimization dynamics under Gaussian data  and/or
teacher-student assumptions. Yin et al.\ \cite{yin2019} considered a
one-hidden-layer model with binarized ReLU activations and Gaussian inputs.
For suitable STE surrogates, they showed that the expected coarse gradient is
positively correlated with a population gradient and established convergence
to critical points of the population objective. Long, Yin, and Xin
\cite{long2021} considered broader classes of quantized activations and
monotone STE surrogates, proving population convergence and perfect
classification under structured-distribution and boundedness assumptions.

These population analyses provide important insight into coarse-gradient
dynamics, but they do not study an estimator trained from a finite set of training data.
Their formulation effectively assumes direct access to expectations under the
data distribution. Consequently, convergence of the population coarse-gradient
dynamics does not by itself explain whether the STE estimator trained on a finite set of training data 
generalizes well to the unseen test data. Jeong, Xin, and Yin \cite{jeong2026} recently made an important advance toward
finite-sample analysis. They established sample-complexity conditions for
convergence of STE in a two-layer network with binary weights and activations.
Their analysis assumes Gaussian covariates and a planted binary ground-truth
network generating the observations, and its objective is convergence to, or
recovery of, the planted quantized weights. Thus, their result explains how
sample size affects STE optimization under a particular generative model, but
it does not provide a generalization bound for the data-dependent STE
estimator.

Statistical generalization is a core question of Statistical Learning Theory \cite{bousquet2004introduction,mohri2018foundations,steinwart2008support,vapnik1998statistical}: how well
does the predictor produced from a finite training dataset perform on unseen
data? For STE training, this question remains largely unexplored. Moreover, it
cannot be addressed by a routine application of existing SGD theory because of the first two reasons.  Firstly, the STE
coarse gradient is generally neither the gradient nor a subgradient of the
displayed empirical network loss;  Secondly,  the discontinuous forward map makes a
direct stability analysis of the network weights difficult.

In this paper, we investigate the generalization of samplewise identity-STE
training through the framework of Algorithmic Stability \cite{bousquet2002stability,hardt2016, lei2020}. Algorithmic stability measures how
much the learned model changes when one training example is replaced and
converts this sensitivity into a generalization guarantee.  Our central idea here is that, for a zero-initialized two-layer binary-activation network, the entire identity-STE trajectory admits an exact rank-one representation. In the saturated hinge regime, this reduction enables us to construct a convex latent loss such that the associated aggregate latent vector evolves exactly according to stochastic subgradient descent. 

This latent convex loss makes an algorithmic-stability analysis
possible. We establish approximate non-expansiveness of the single-example
update map and apply the nonsmooth on-average stability approach of Lei and
Ying \cite{lei2020}. The rank-one representation then transfers stability
isometrically from the latent vector to the complete first-layer weight
matrix, without dependence on the network width. In contrast to the earlier
population and recovery analyses, our stability and induced-risk
generalization results require NEITHER Gaussian covariates NOR the existence
of a planted ground-truth neural network.

Combining stability with a stochastic subgradient optimization bound gives an
explicit vanishing excess induced-risk guarantee. Under an additional
large-margin separability condition, we also obtain the optimal-order
\[
O\!\left(\frac{R^2}{\gamma^2n}\right)
\]
expected misclassification error for a randomized one-pass STE iterate and a
corresponding majority-vote guarantee. Specifically, Our main contributions are summarized as follows. 
\begin{enumerate}
\item We identify an exact convex representation of the samplewise hinge-loss identity-STE recursion.  In the saturated regime it is unprojected stochastic subgradient descent on $F_z(u)=(-yu^\top x)_+$, and the rank-one trajectory connects the latent iterate exactly to the two-layer network parameters.  This is the structural step that makes an SLT stability analysis possible.

\item The loss convexification through the latent convex loss enables an algorithmic-stability analysis of STE.  Building on the nonsmooth on-average stability framework of Lei and Ying, we prove an exact distance identity and approximate non-expansiveness for the coupled STE updates, and derive explicit stability, generalization, and excess induced-risk bounds.  The rank-one representation transfers the stability result to the full first-layer matrix without an additional width or dimension factor, yielding an $O(n^{-1/2})$ excess induced-risk bound when $T=n^2$.

\item For the separable data with large margin, the exact reduction to perceptron iterates along  the rank-one trajectory gives an optimal-order excess misclassification result for one-pass STE.  A randomized iterate has expected excess error at most $R^2/(\gamma^2n)$, while the majority-vote classifier has error at most $2R^2/(\gamma^2n)$.  Up to universal constants, the dependence on $R$, $\gamma$, and $n$ matches the minimax lower bound \cite{hanneke2021} for realizable large-margin linear classification   when the ambient dimension is at least of order $R^2/\gamma^2$.
\end{enumerate}

\subsection{More Discussion on Related work}
\paragraph{Activation STE and coarse-gradient dynamics.}
Bengio, L\'eonard, and Courville \cite{bengio2013} proposed the straight-through heuristic for propagating information through stochastic or nondifferentiable units.  The first convergence analyses closest to our activation-quantized setting are population analyses.  Yin et al.\ \cite{yin2019} considered a one-hidden-layer convolutional model with binarized ReLU activations and Gaussian inputs.  They showed that, for a suitable ReLU STE, the expected coarse gradient is positively correlated with a population gradient and obtained convergence to critical points; they also exhibited instability of the identity STE near certain local minima.  Long, Yin, and Xin \cite{long2021} treated a broader class of quantized activations and monotone STE surrogates in a two-linear-layer classification model, proving population convergence and perfect classification under structured-distribution and boundedness assumptions.  They also made explicit the connection between identity STE and the perceptron update.  These results concern the direction and convergence of a \emph{population coarse gradient}.  Our question is different: for each finite sample, can the actual identity-STE update be represented exactly by a tractable loss, and how sensitive is the resulting randomized training algorithm to replacing one example?

\paragraph{Optimization interpretations for quantized weights.}
A complementary literature interprets STE-type weight updates through standard optimization algorithms.  Bai, Wang, and Liberty \cite{bai2019} showed that BinaryConnect can be viewed as dual averaging (or lazy projected stochastic gradient descent), gave a convex example on which it fails to converge, and proposed the proximal method ProxQuant with stationary-point guarantees.  Ajanthan et al.\ \cite{ajanthan2021} derived quantization-aware mirror maps and interpreted the continuous auxiliary weights as dual variables, with a convex convergence analysis for time-varying mirror descent.  Dockhorn et al.\ \cite{dockhorn2021} refined the dual-averaging interpretation, related BinaryConnect to generalized conditional-gradient dynamics on a smoothed dual objective, and established convergence guarantees for their ProxConnect generalization.  These papers clarify why latent full-precision weights can support quantized forward passes.  Their principal object, however, is \emph{weight} quantization and optimization convergence; they do not identify the samplewise activation-STE update studied here with subgradient descent on an induced convex loss, nor do they derive replace-one stability bounds for it.

\paragraph{Stochastic binary and finite-sample analyses.}
Shekhovtsov and Yanush \cite{shekhovtsov2021} derived principled straight-through estimators for stochastic binary networks, analyzed their estimation accuracy, and connected latent parameters to mirror descent over Bernoulli probabilities.  This is an estimator analysis for a stochastic binary model, rather than an exact convex representation of deterministic identity-STE iterates.  More recently, Jeong, Xin, and Yin \cite{jeong2026} developed sample-complexity guarantees for STE in a two-layer network with binary weights and activations, including ergodic and nonergodic convergence and behavior under label noise.  Their finite-sample objective is recovery or convergence to a global solution under a distributional model.  In contrast, our finite-sample statement is an algorithmic generalization guarantee obtained by coupling two runs on neighboring datasets; it does not assert recovery of a teacher network.

\paragraph{Algorithmic stability.}
Hardt, Recht, and Singer \cite{hardt2016} established algorithmic-stability bounds for SGD, using non-expansiveness of smooth convex gradient steps.  Lei and Ying \cite{lei2020} introduced $\ell_2$ on-average model stability and treated nonsmooth convex losses through approximate non-expansiveness.  In particular, their $\alpha=0$ result already yields an $O(n^{-1/2})$ rate when $T$ is of order $n^2$.  Existing STE theory uses the word ``stability'' mainly for optimization behavior or coarse-gradient dynamics, which is distinct from sensitivity to replacing a training example.  To the best of our knowledge, an on-average algorithmic-stability guarantee has not previously been established for the samplewise STE training recursion considered here.  Our proof is a self-contained specialization of Lei and Ying \cite{lei2020}: the STE-specific work is to prove that saturated hinge-loss identity STE for the discontinuous forward model induces exactly \eqref{eq:psi}, that its common-example map obeys \eqref{eq:mapidentity}, and that the latent stability bound transfers isometrically to the full first-layer matrix.  

\section{Two-layer model and aggregate dynamics}
Let $(x,y)\in\R^d\times\{-1,+1\}$ with $\norm{x}\le R$. Consider
\begin{equation}\label{eq:network}
f_W(x)=\frac1{\sqrt m}\sum_{j=1}^m a_j\sigma(w_j^\top x),
\qquad
\sigma(s)=\1_{\{s>0\}},
\qquad a_j\in\{-1,+1\},
\end{equation}
where the output signs are fixed. Write
\[
m_+=|\{j:a_j=+1\}|,\quad
m_-=|\{j:a_j=-1\}|,\quad
A_+=\frac{m_+}{\sqrt m},\quad
A_-=\frac{m_-}{\sqrt m},
\]
and assume $m_+,m_->0$.

Fix $\rho>0$ and consider the hinge margin loss
\[
\ell_{\mathrm h,\rho}(z)=(\rho-z)_+.
\]
We select the subgradient
\[
\ell_{\mathrm h,\rho}'(z)=
\begin{cases}
-1,&z<\rho,\\
0,&z\ge\rho,
\end{cases}
\]
where the value $0$ is chosen at the nondifferentiable point $z=\rho$.
At iteration $t$, let
\begin{equation}\label{eq:beta}
z_t=y_tf_{W_t}(x_t),
\qquad
\beta_t=-\ell_{\mathrm h,\rho}'(z_t)
=\1_{\{z_t<\rho\}}.
\end{equation}
Thus $\beta_t=1$ when the quantized forward margin is below the hinge threshold and $\beta_t=0$ otherwise. Using the identity STE $\widetilde\sigma'(s)=1$, the first-layer update is
\begin{equation}\label{eq:wupdate}
w_{j,t+1}=w_{j,t}+\eta_t\beta_ty_t\frac{a_j}{\sqrt m}x_t.
\end{equation}
Define $u_t=m^{-1/2}\sum_j a_jw_{j,t}$.

\begin{proposition}[Rank-one trajectory of identity STE]\label{prop:reduction}
Suppose $w_{j,1}=0$ for every $j$. Under the identity-STE update \eqref{eq:wupdate}, for every $j$ and $t$,
\begin{equation}\label{eq:rankone}
w_{j,t}=\frac{a_j}{\sqrt m}u_t,
\end{equation}
where the aggregate vector obeys the exact recursion
\begin{equation}\label{eq:uupdate}
u_{t+1}=u_t+\eta_t\beta_ty_tx_t.
\end{equation}
Consequently, whenever $u_t^\top x\ne0$,
\begin{equation}\label{eq:amplitudes}
f_{W_t}(x)=
\begin{cases}
A_+,&u_t^\top x>0,\\
-A_-,&u_t^\top x<0.
\end{cases}
\end{equation}
and therefore $\sign(f_{W_t}(x))=\sign(u_t^\top x)$.
\end{proposition}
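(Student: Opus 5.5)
The proof is an induction on $t$, followed by a direct evaluation of the forward map. The key observation is that in the update \eqref{eq:wupdate}, the scalar $\eta_t\beta_ty_t$ and the vector $x_t$ are common to every neuron $j$. Only the factor $a_j/\sqrt m$ depends on $j$. So if all rows start proportional to $a_j$, they stay proportional to $a_j$. The one point to watch is that $\beta_t$ is defined through the quantized forward margin $z_t=y_tf_{W_t}(x_t)$. That margin is a single scalar shared by all neurons, whatever its value, so it causes no difficulty.

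\emph{Base case.} At $t=1$ we have $w_{j,1}=0$ for every $j$, so $u_1=m^{-1/2}\sum_j a_j\cdot 0=0$. Hence $w_{j,1}=\frac{a_j}{\sqrt m}u_1$ holds trivially.

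\emph{Inductive step.} Assume \eqref{eq:rankone} holds at time $t$. Then \eqref{eq:wupdate} gives
\[
w_{j,t+1}=\frac{a_j}{\sqrt m}\bigl(u_t+\eta_t\beta_ty_tx_t\bigr).
\]
Next, compute $u_{t+1}$ from its definition, using $a_j^2=1$:
\[
u_{t+1}=m^{-1/2}\sum_j a_jw_{j,t+1}=\frac1m\sum_{j=1}^m a_j^2\bigl(u_t+\eta_t\beta_ty_tx_t\bigr)=u_t+\eta_t\beta_ty_tx_t.
\]
This is \eqref{eq:uupdate}. Substituting it back into the previous display gives $w_{j,t+1}=\frac{a_j}{\sqrt m}u_{t+1}$, which closes the induction.

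\emph{Forward map.} Fix $x$ with $u_t^\top x\neq 0$. By \eqref{eq:rankone}, $w_{j,t}^\top x=\frac{a_j}{\sqrt m}u_t^\top x$, and since $1/\sqrt m>0$ its sign is $\sign(a_j)\sign(u_t^\top x)$.

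If $u_t^\top x>0$, exactly the neurons with $a_j=+1$ are active. Then
\[
f_{W_t}(x)=\frac1{\sqrt m}\sum_{j:a_j=+1}1=\frac{m_+}{\sqrt m}=A_+.
\]

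If $u_t^\top x<0$, exactly the neurons with $a_j=-1$ are active. Each contributes $a_j=-1$, so
\[
f_{W_t}(x)=-\frac{m_-}{\sqrt m}=-A_-.
\]

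Since $m_+,m_->0$, we have $A_+>0$ and $-A_-<0$. This gives $\sign(f_{W_t}(x))=\sign(u_t^\top x)$, which is \eqref{eq:amplitudes}.

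Everything here is routine. The only step needing care is the inductive step: one must check that $\beta_t$, computed from the full network, is a common scalar across neurons, which holds because $z_t$ is one number. No case analysis on $\beta_t$ is required.
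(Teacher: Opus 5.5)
Your proof is correct and follows the paper's argument: induction from zero initialization, then direct evaluation of the forward map on the rank-one trajectory. The only cosmetic difference is that the paper derives the recursion \eqref{eq:uupdate} directly from the definition of $u_t$ and \eqref{eq:wupdate}, without the inductive hypothesis (so it holds for any initialization), whereas you derive it inside the inductive step.
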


\begin{proof}
First, using $a_j^2=1$ and the definition of $u_t$,
\begin{align*}
u_{t+1}
&=\frac1{\sqrt m}\sum_{j=1}^m a_jw_{j,t+1}\\
&=\frac1{\sqrt m}\sum_{j=1}^m
a_j\left(w_{j,t}+\eta_t\beta_ty_t\frac{a_j}{\sqrt m}x_t\right)\\
&=u_t+\frac{\eta_t\beta_ty_t}{m}
\sum_{j=1}^m a_j^2x_t\\
&=u_t+\eta_t\beta_ty_tx_t,
\end{align*}
which proves \eqref{eq:uupdate}.

We next prove \eqref{eq:rankone} by induction. Since $w_{j,1}=0$ for every $j$, the definition of $u_1$ gives $u_1=0$, so the identity holds at $t=1$. If it holds at time $t$, then \eqref{eq:wupdate} and \eqref{eq:uupdate} give
\begin{align*}
w_{j,t+1}
&=\frac{a_j}{\sqrt m}u_t
+\eta_t\beta_ty_t\frac{a_j}{\sqrt m}x_t\\
&=\frac{a_j}{\sqrt m}
\left(u_t+\eta_t\beta_ty_tx_t\right)\\
&=\frac{a_j}{\sqrt m}u_{t+1}.
\end{align*}
Induction proves the rank-one trajectory for every $j$ and $t$.

Finally, \eqref{eq:rankone} implies
\[
w_{j,t}^\top x=\frac{a_j}{\sqrt m}u_t^\top x.
\]
If $u_t^\top x>0$, exactly the neurons with $a_j=+1$ are active, and their total output is $m_+/\sqrt m=A_+$. If $u_t^\top x<0$, exactly the neurons with $a_j=-1$ are active, and their total output is $-m_-/\sqrt m=-A_-$. This proves \eqref{eq:amplitudes} and the classifier identity.
\end{proof}

Thus, under zero initialization, the full first-layer trajectory remains rank one: identity STE trains a single aggregate vector, and the two-layer binary classifier is exactly linear in prediction space. The result is the elementary identity-STE/perceptron reduction and is not claimed as new.

\begin{assumption}[Saturated hinge regime]\label{ass:saturated}
The fixed output layer satisfies
\begin{equation}\label{eq:saturation}
A_+\ge\rho,\qquad A_-\ge\rho.
\end{equation}
\end{assumption}

Assumption~\ref{ass:saturated} is mild and can be ensured by choosing a sufficiently wide network with balanced output signs. Indeed, if $m$ is even and
\[
m_+=m_-=\frac{m}{2},
\]
then $A_+=A_-=\sqrt m/2$. Consequently, \eqref{eq:saturation} holds whenever $m\ge 4\rho^2$. Thus, the saturated hinge regime imposes only a lower bound on the network width and does not restrict the input distribution or the trainable first-layer weights. Its role is to ensure that the hinge-loss activity indicator is determined entirely by the sign of the latent margin, which yields the convex latent representation of the identity-STE update.

\section{Exact hinge-loss convexification}
For $z=(x,y)$ define the induced latent loss
\begin{equation}\label{eq:psi}
F_z(u)=(-yu^\top x)_+.
\end{equation}

\begin{theorem}[Exact hinge convexification]\label{thm:convex}
Suppose that \cref{ass:saturated} holds and \(w_{j,1}=0\) for all
\(j\in[m]\). Let \(W_t\) be generated by identity-STE training with
outer loss \(\ell_{\mathrm h,\rho}(y_tf_{W_t}(x_t))\). Then, along the
rank-one trajectory \eqref{eq:rankone}, the associated latent vector
\(u_t\) satisfies
\begin{equation}\label{eq:latentupdate}
u_{t+1}
=u_t-\eta_t g_t,
\qquad
g_t=-y_tx_t\1_{\{y_tu_t^\top x_t\le0\}}
\in\partial F_{z_t}(u_t).
\end{equation}
Thus, the latent dynamics induced by identity STE are exactly
unprojected stochastic subgradient descent on the convex latent loss
\(F_z\) defined in \eqref{eq:psi}. At \(y_tu_t^\top x_t=0\), STE
selects the endpoint subgradient \(g_t=-y_tx_t\).
\end{theorem}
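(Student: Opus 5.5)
The plan is to combine \cref{prop:reduction} with a three-way case analysis on the sign of the latent margin $y_tu_t^\top x_t$. The aim is to show that, under \cref{ass:saturated}, the hinge activity indicator satisfies
\[
\beta_t=\1_{\{z_t<\rho\}}=\1_{\{y_tu_t^\top x_t\le 0\}} .
\]
Once this holds, the recursion \eqref{eq:uupdate} reads $u_{t+1}=u_t+\eta_t y_tx_t\1_{\{y_tu_t^\top x_t\le0\}}=u_t-\eta_tg_t$. The remaining task is then to check $g_t\in\partial F_{z_t}(u_t)$.

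\textbf{Case analysis for $\beta_t$.} \cref{prop:reduction} lets me use the rank-one form $w_{j,t}=\frac{a_j}{\sqrt m}u_t$ throughout.

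If $u_t^\top x_t\ne0$, then \eqref{eq:amplitudes} gives $f_{W_t}(x_t)\in\{A_+,-A_-\}$, with the sign of $u_t^\top x_t$. There are two subcases.
\begin{itemize}
\item If $y_tu_t^\top x_t>0$, the prediction has the correct sign. Then $z_t=A_+$ when $y_t=1$ and $z_t=A_-$ when $y_t=-1$. In either case $z_t\ge\rho$ by \eqref{eq:saturation}, so the chosen subgradient convention (value $0$ at $z=\rho$) gives $\beta_t=0$.
\item If $y_tu_t^\top x_t<0$, then $z_t\in\{-A_+,-A_-\}$, so $z_t<0<\rho$ and $\beta_t=1$.
\end{itemize}

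The boundary case $u_t^\top x_t=0$ is not covered by \eqref{eq:amplitudes}, so I treat it directly. Here every preactivation $w_{j,t}^\top x_t=\frac{a_j}{\sqrt m}u_t^\top x_t$ equals $0$. Since $\sigma(0)=\1_{\{0>0\}}=0$, no neuron is active, so $f_{W_t}(x_t)=0$ and $z_t=0<\rho$, giving $\beta_t=1$. This matches $\1_{\{y_tu_t^\top x_t\le0\}}=1$.

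\textbf{Subgradient membership.} $F_z$ is convex as the composition of the convex map $s\mapsto s_+$ with a linear map. By the chain rule for subdifferentials,
\[
\partial F_{z}(u)=\{-yx\,\xi:\ \xi\in\partial(\cdot)_+(-yu^\top x)\}.
\]
When $yu^\top x<0$ this equals $\{-yx\}$, and when $yu^\top x>0$ it equals $\{0\}$. At $yu^\top x=0$ it is the segment $\{-\xi yx:\xi\in[0,1]\}$, and the endpoint $\xi=1$ gives $-yx$. So $g_t$ is a valid subgradient in all cases, and at the kink STE selects the endpoint, as claimed.

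\textbf{Expected obstacle.} The only delicate point is the boundary case $u_t^\top x_t=0$, which includes $t=1$ since $u_1=0$. Here one must use the strict-inequality convention $\sigma(s)=\1_{\{s>0\}}$ to get $z_t=0$, and hence $\beta_t=1$, rather than appeal to \eqref{eq:amplitudes}. The saturation assumption is used only in the correct-sign case, to ensure $\beta_t=0$ there.
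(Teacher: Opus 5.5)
Your proposal is correct and follows essentially the same route as the paper. You use the rank-one form to get $f_{W_t}(x_t)\in\{A_+,0,-A_-\}$ (treating the tie via $\sigma(0)=0$), invoke saturation to obtain $\beta_t=\1_{\{y_tu_t^\top x_t\le0\}}$, substitute into \eqref{eq:uupdate}, and read off $g_t$ from the three-case subdifferential of $F_{z_t}$. Your remark that saturation is needed only in the correct-sign case matches the backward direction of the paper's equivalence $y_tf_{W_t}(x_t)\ge\rho \iff y_tu_t^\top x_t>0$.
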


\begin{proof}
By Proposition~\ref{prop:reduction}, the identity-STE iterates remain
on the rank-one trajectory
\[
w_{j,t}=\frac{a_j}{\sqrt m}u_t,
\qquad j\in[m].
\]
Along the rank-one trajectory
\(w_{j,t}=a_j u_t/\sqrt m\), the quantized network output satisfies
\[
f_{W_t}(x_t)=
\begin{cases}
 A_+,&u_t^\top x_t>0,\\
 0,&u_t^\top x_t=0,\\
 -A_-,&u_t^\top x_t<0.
\end{cases}
\]
Since \(A_+\ge\rho\) and \(A_-\ge\rho\) by
\cref{ass:saturated}, it follows that
\[
y_tf_{W_t}(x_t)\ge\rho
\quad\Longleftrightarrow\quad
y_tu_t^\top x_t>0.
\]
Therefore, with the hinge-loss subgradient convention
\[
\beta_t=\1_{\{y_tf_{W_t}(x_t)<\rho\}},
\]
the STE update coefficient is exactly
\[
\beta_t=\1_{\{y_tu_t^\top x_t\le0\}}.
\]
In particular, the update is active precisely when the latent
classifier makes an error or a tie.

Substituting the rank-one representation into the STE update
\eqref{eq:wupdate} gives
\[
w_{j,t+1}
=\frac{a_j}{\sqrt m}
\left(
u_t+\eta_t\beta_t y_tx_t
\right).
\]
Therefore, the associated latent vector satisfies
\[
u_{t+1}
=u_t+\eta_t y_tx_t
  \1_{\{y_tu_t^\top x_t\le0\}}
=u_t-\eta_t g_t,
\]
where
\[
g_t=-y_tx_t\1_{\{y_tu_t^\top x_t\le0\}}.
\]

By the definition of the latent loss \eqref{eq:psi},
\[
\partial F_{z_t}(u)=
\begin{cases}
\{-y_tx_t\},&y_tu^\top x_t<0,\\[2mm]
\{\lambda(-y_tx_t):\lambda\in[0,1]\},
   &y_tu^\top x_t=0,\\[2mm]
\{0\},&y_tu^\top x_t>0.
\end{cases}
\]
Thus \(g_t\in\partial F_{z_t}(u_t)\) in every case. At
\(y_tu_t^\top x_t=0\), the identity STE selects the endpoint
\(g_t=-y_tx_t\). Consequently, the latent vector induced by the
rank-one STE trajectory follows exactly unprojected stochastic
subgradient descent on \(F_z\).
\end{proof}

\begin{remark}
The theorem does not assert that the coarse gradient is the gradient of the discontinuous network risk. It identifies a different convex loss whose subgradient agrees with the coarse gradient along the rank-one trajectory.
\end{remark}

\section{Algorithmic stability of the STE recursion}\label{sec:stability}
In this section, we establish algorithmic-stability bounds for STE. For any $n\in \mathbb N$, we use the conventational notation $[n]= \{1,2,\ldots, n\}$. A direct analysis of the full weight matrix
\[
W_t=(w_{j,t}:j\in[m])
\]
is difficult because the update \eqref{eq:wupdate} involves the quantized activation. Proposition~\ref{prop:reduction} resolves this difficulty by showing that identity STE follows a rank-one trajectory. In particular, for two coupled runs on neighboring datasets $S$ and $S^{(i)}$, \eqref{eq:rankone} gives
\begin{align}
\left\|W_t(S)-W_t(S^{(i)})\right\|_F^2
&=\sum_{j=1}^m\frac{a_j^2}{m}
\left\|u_t(S)-u_t(S^{(i)})\right\|^2\notag\\
&=\left\|u_t(S)-u_t(S^{(i)})\right\|^2,
\label{eq:stabilityisometry}
\end{align}
where $a_j^2=1$. Thus, the stability analysis of the full STE weight matrix reduces isometrically to that of the latent iterate $u_t$, without any dependence on the network width. Moreover, $u_t$ follows the stochastic subgradient recursion \eqref{eq:latentupdate} associated with the convex latent loss $F_z$. This loss convexification enables us to apply the algorithmic-stability approach for nonsmooth stochastic gradient methods \cite{lei2020} to STE.

To express the latent recursion as a single-example update map, define
\[
\alpha(s)=
\begin{cases}
1,&s\le0,\\
0,&s>0.
\end{cases}
\]
Then $F_z$ is convex and $R$-Lipschitz, and $\alpha$ is nonincreasing with a unit jump. Define the unprojected single-example map on $\R^d$ by
\begin{equation}\label{eq:updatemap}
\mathcal G_{\eta,z}(u)
=u+\eta\alpha(yu^\top x)yx.
\end{equation}
Indeed, the identity STE selects
\[
g_z(u)=-\alpha(yu^\top x)yx\in\partial F_z(u),
\]
and hence \eqref{eq:latentupdate} is equivalently
\[
u_{t+1}=\mathcal G_{\eta_t,z_t}(u_t).
\]
Thus, \eqref{eq:updatemap} introduces no new training rule; it is the update-map representation of the latent identity-STE recursion established in \cref{thm:convex}.

\begin{lemma}[Exact update-map identity]\label{lem:map}
Let $s=yu^\top x$, $s'=yv^\top x$, and $\Delta\alpha=\alpha(s)-\alpha(s')$. Then
\begin{equation}\label{eq:mapidentity}
\left\|u+\eta\alpha(s)yx-v-\eta\alpha(s')yx\right\|_2^2
=\norm{u-v}^2+2\eta\Delta\alpha(s-s')
+\eta^2(\Delta\alpha)^2\norm{x}^2.
\end{equation}
Consequently,
\begin{equation}\label{eq:approxnonexp}
\norm{\mathcal G_{\eta,z}(u)-\mathcal G_{\eta,z}(v)}^2
\le\norm{u-v}^2+\eta^2R^2.
\end{equation}
\end{lemma}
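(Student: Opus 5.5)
The identity \eqref{eq:mapidentity} is a direct expansion of a squared norm. Write the difference as $(u-v)+\eta\Delta\alpha\, yx$. Expanding gives
\[
\norm{u-v}^2+2\eta\Delta\alpha\, y(u-v)^\top x+\eta^2(\Delta\alpha)^2y^2\norm{x}^2 .
\]
Since $y^2=1$ and $y(u-v)^\top x=yu^\top x-yv^\top x=s-s'$, this is exactly \eqref{eq:mapidentity}. No assumption on $\alpha$ is needed at this stage.

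For the consequence \eqref{eq:approxnonexp}, I will control the two correction terms separately, using only that $\alpha$ takes values in $\{0,1\}$ and is nonincreasing.

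\textbf{Cross term.} I claim $\Delta\alpha\,(s-s')\le0$, and I will check it by cases on $(\alpha(s),\alpha(s'))$.
\begin{itemize}
\item If $\alpha(s)=\alpha(s')$, then $\Delta\alpha=0$ and the term vanishes.
\item If $\alpha(s)=1$ and $\alpha(s')=0$, then $s\le0<s'$, so $\Delta\alpha=1$ and $s-s'<0$.
\item If $\alpha(s)=0$ and $\alpha(s')=1$, then $s'\le0<s$, so $\Delta\alpha=-1$ and $s-s'>0$.
\end{itemize}
In every case the product is nonpositive. This is just monotonicity of $\alpha$: $\alpha(s)-\alpha(s')$ and $s-s'$ always have opposite signs or one of them vanishes. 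Equivalently, it is monotonicity of the subdifferential of the convex $F_z$, since $g_z(u)=-\alpha(s)yx$.

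\textbf{Quadratic term.} Because $\alpha\in\{0,1\}$, we have $(\Delta\alpha)^2\le1$, and $\norm{x}\le R$. Hence the last term is at most $\eta^2R^2$. It is nonzero only when the two margins straddle zero, which is the defect mentioned in the abstract.

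Substituting both bounds into \eqref{eq:mapidentity}, together with the definition \eqref{eq:updatemap} of $\mathcal G_{\eta,z}$, gives \eqref{eq:approxnonexp}.

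There is no real obstacle here. The only point needing care is the sign of the cross term. It rests on the convention $\alpha(0)=1$, which is consistent with the case analysis because $\alpha$ remains nonincreasing under that convention.
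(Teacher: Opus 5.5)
Your proof is correct and matches the paper's argument: expand the square using $\langle u-v,yx\rangle=s-s'$ and $y^2=1$, drop the cross term because $\alpha$ is nonincreasing (your case check spells this out and implicitly uses $\eta\ge0$), and bound the last term by $(\Delta\alpha)^2\le1$ and $\norm{x}\le R$. One small remark: the sign of the cross term needs only the monotonicity of $\alpha$, which holds whichever value is taken at $s=0$, so the convention $\alpha(0)=1$ matters for matching the STE tie rule rather than for this lemma.
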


\begin{proof}
Expand the square and use $\langle u-v,yx\rangle=s-s'$. Since $\alpha$ is nonincreasing, $\Delta\alpha(s-s')\le0$. Dropping this term and using $|\Delta\alpha|\le1$ and $\norm{x}\le R$ gives \eqref{eq:approxnonexp}. 
\end{proof}

We now introduce the coupled stability setting. Let
\[
S=(z_1,\ldots,z_n),
\qquad
\widetilde S=(\widetilde z_1,\ldots,\widetilde z_n)
\]
be independent samples. For each $i\in[n]$, define
\[
z_j^{(i)}=
\begin{cases}
\widetilde z_i,&j=i,\\
z_j,&j\ne i,
\end{cases}
\qquad
S^{(i)}=(z_1^{(i)},\ldots,z_n^{(i)}).
\]
Thus, $S^{(i)}$ is obtained from $S$ by replacing $z_i$ with the independent observation $\widetilde z_i$.

Run STE on $S$ and $S^{(i)}$ using the same i.i.d.\ indices $I_t$, each sampled uniformly from $[n]$:
\[
u_{t+1}=\mathcal G_{\eta_t,z_{I_t}}(u_t),
\qquad
u_{t+1}^{(i)}
=\mathcal G_{\eta_t,z_{I_t}^{(i)}}(u_t^{(i)}),
\qquad u_1=u_1^{(i)}=0.
\]
Here,
\[
z_{I_t}^{(i)}=
\begin{cases}
\widetilde z_i,&I_t=i,\\
z_{I_t},&I_t\ne i.
\end{cases}
\]
Consequently, the two coupled runs use the same training example whenever $I_t\ne i$ and may use different examples only when $I_t=i$. All expectations below are taken with respect to $S$, $\widetilde S$, and the index sequence $(I_t)$.

\begin{definition}[$\ell_2$ on-average model stability]\label{def:oams}
For a randomized algorithm $A$, define
\[
\epsilon_2(A)
=\Big(\E\bigl[\frac1n\sum_{i=1}^n
\norm{A(S)-A(S^{(i)})}^2\bigr]\Big)^{1\over 2}.
\]
\end{definition}

By \eqref{eq:stabilityisometry}, it suffices to control the on-average stability of the latent iterates $\{u_t\}$ :
\[
\E\left[\frac1n\sum_{i=1}^n
\norm{u_t(S)-u_t(S^{(i)})}^2\right].
\]

\begin{theorem}[Explicit on-average stability of STE]\label{thm:stability}
Let
\[
\Delta_t
=\E\left[\frac1n\sum_{i=1}^n\norm{u_t-u_t^{(i)}}^2\right],
\qquad
C_n^2=5+\frac4n.
\]
Then, for every $t\ge1$,
\begin{equation}\label{eq:stabilityrec}
\Delta_{t+1}
\le\left(1+\frac1{n^2}\right)\Delta_t
+C_n^2R^2\eta_t^2,
\end{equation}
and therefore
\begin{equation}\label{eq:stabilityexplicit}
\Delta_{t+1}
\le C_n^2R^2
\sum_{k=1}^t\eta_k^2
\left(1+\frac1{n^2}\right)^{t-k}
\le C_n^2R^2e^{t/n^2}\sum_{k=1}^t\eta_k^2.
\end{equation}
For the weighted output
\begin{equation}\label{eq:averageiterate}
\bar u_T=\frac{\sum_{t=1}^T\eta_tu_t}{H_T},
\qquad H_T=\sum_{t=1}^T\eta_t,
\end{equation}
we have
\begin{equation}\label{eq:barstability}
\epsilon_2(\bar u_T)
\le C_n R\,e^{T/(2n^2)}
\left(\sum_{t=1}^T\eta_t^2\right)^{1/2}.
\end{equation}
Moreover, on the rank-one trajectory,
\begin{equation}\label{eq:isometry}
\left\|W_t(S)-W_t(S^{(i)})\right\|_F
=\norm{u_t(S)-u_t(S^{(i)})},
\end{equation}
so the same stability bound holds for the full first-layer matrix.
\end{theorem}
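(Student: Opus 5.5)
We will prove the one-step recursion \eqref{eq:stabilityrec} by conditioning on the random index $I_t$, unroll it, pass to the weighted average by convexity of the squared norm, and finally invoke the isometry \eqref{eq:stabilityisometry}. Fix $i\in[n]$ and write $D_t^{(i)}=\norm{u_t-u_t^{(i)}}^2$. Since $I_t$ is drawn uniformly and independently of $(u_t,u_t^{(i)})$, which depend only on $S,\widetilde S,I_1,\dots,I_{t-1}$, we can split according to whether $I_t\ne i$ or $I_t=i$.

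\emph{Case $I_t\ne i$} (probability $1-1/n$). Both runs apply the same map $\mathcal G_{\eta_t,z_{I_t}}$. Lemma~\ref{lem:map}, specifically \eqref{eq:approxnonexp}, gives
\[
D_{t+1}^{(i)}\le D_t^{(i)}+\eta_t^2R^2 .
\]

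\emph{Case $I_t=i$} (probability $1/n$). The two runs use the different examples $z_i$ and $\widetilde z_i$, so
\[
u_{t+1}-u_{t+1}^{(i)}=(u_t-u_t^{(i)})+\eta_t\bigl(\alpha(y_iu_t^\top x_i)y_ix_i-\alpha(\widetilde y_i u_t^{(i)\top}\widetilde x_i)\widetilde y_i\widetilde x_i\bigr).
\]
The second term has norm at most $2\eta_tR$. Using $\|a+b\|^2\le(1+p)\|a\|^2+(1+1/p)\|b\|^2$ with the choice $p=1/n$, we get
\[
D_{t+1}^{(i)}\le\Bigl(1+\tfrac1n\Bigr)D_t^{(i)}+4(1+n)\eta_t^2R^2 .
\]

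Taking conditional expectations over $I_t$ and combining the two cases yields
\[
\E_{I_t} D_{t+1}^{(i)}\le\Bigl(1-\tfrac1n+\tfrac1n\bigl(1+\tfrac1n\bigr)\Bigr)D_t^{(i)}+\Bigl(1-\tfrac1n+\tfrac{4(n+1)}n\Bigr)\eta_t^2R^2 .
\]
The coefficient of $D_t^{(i)}$ equals $1+1/n^2$. The constant equals $5+3/n\le C_n^2$. Averaging over $i$ and taking full expectation gives \eqref{eq:stabilityrec}.

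For \eqref{eq:stabilityexplicit}, we unroll the recursion from $\Delta_1=0$ (since $u_1=u_1^{(i)}=0$). We then bound $(1+1/n^2)^{t-k}\le e^{(t-k)/n^2}\le e^{t/n^2}$.

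For the weighted output \eqref{eq:averageiterate}, the map $\|\cdot\|^2$ is convex and the weights $\eta_t/H_T$ sum to one. Hence
\[
\norm{\bar u_T-\bar u_T^{(i)}}^2\le\sum_{t=1}^T\frac{\eta_t}{H_T}\,D_t^{(i)} .
\]
Averaging over $i$ and taking expectation gives
\[
\epsilon_2(\bar u_T)^2\le\max_{t\le T}\Delta_t .
\]
By \eqref{eq:stabilityexplicit}, $\Delta_t\le C_n^2R^2e^{T/n^2}\sum_{k=1}^T\eta_k^2$ for every $t\le T$, because both the exponential and the partial sums are monotone in $t$. Taking square roots gives \eqref{eq:barstability}.

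Finally, \eqref{eq:isometry} is exactly \eqref{eq:stabilityisometry}, which follows from Proposition~\ref{prop:reduction} together with $a_j^2=1$. Hence every bound above transfers verbatim to $W_t$ in Frobenius norm.

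The only delicate step is the case $I_t=i$. A naive triangle inequality there would produce a factor $1+1/n$ per step, which is too large. The Young-type splitting with parameter $p=1/n$, weighted by the probability $1/n$ of this case, is what turns the per-step growth factor into $1+1/n^2$ while keeping the additive constant of order one. The independence of $I_t$ from the current iterates is also essential to justify the conditional averaging.
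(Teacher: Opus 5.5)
Your proof is correct and follows the paper's argument essentially step for step. It uses the same split on whether $I_t=i$, \cref{lem:map} for the common-example step, the $2\eta_tR$ bound when the examples differ, unrolling from $\Delta_1=0$, convexity of the squared norm for $\bar u_T$, and the isometry \eqref{eq:stabilityisometry}.

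The only difference is where the weighted Young inequality enters. You apply it pathwise with parameter $1/n$. The paper instead squares the triangle inequality, averages, and then applies Cauchy--Schwarz and Young to $\frac{4\eta_tR}{n}\sqrt{\Delta_t}$. So the triangle inequality itself is not the obstacle you describe; only the choice of Young weight matters. Your marginally sharper constant $5+\frac3n$ comes from keeping the weight $1-\frac1n$ on the common-example term, which the paper rounds up to $1$.
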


\begin{proof}
Write $\delta_t^{(i)}=\norm{u_t-u_t^{(i)}}$. Conditional on the past, if $I_t\ne i$, both runs use the same example, and \cref{lem:map} gives
\[
(\delta_{t+1}^{(i)})^2
\le(\delta_t^{(i)})^2+\eta_t^2R^2.
\]
If $I_t=i$, the two subgradients can differ, but both have norm at most $R$. The triangle inequality gives
\[
\delta_{t+1}^{(i)}
\le\delta_t^{(i)}+2\eta_tR,
\]
and hence
\[
(\delta_{t+1}^{(i)})^2
\le(\delta_t^{(i)})^2
+4\eta_tR\,\delta_t^{(i)}
+4\eta_t^2R^2.
\]
Since $\Pr(I_t=i)=1/n$, averaging over $i$, the index, and the data gives
\[
\Delta_{t+1}
\le\Delta_t+\eta_t^2R^2
+\frac{4\eta_tR}{n}\sqrt{\Delta_t}
+\frac{4\eta_t^2R^2}{n}.
\]
Here Cauchy--Schwarz was used for the averaged first moment. Young's inequality gives
\[
\frac{4\eta_tR}{n}\sqrt{\Delta_t}
\le\frac{\Delta_t}{n^2}+4\eta_t^2R^2,
\]
which proves \eqref{eq:stabilityrec}. Since $\Delta_1=0$, iteration and
$(1+n^{-2})^r\le e^{r/n^2}$ prove \eqref{eq:stabilityexplicit}.

For each $i$, convexity of the squared norm gives
\[
\norm{\bar u_T-\bar u_T^{(i)}}^2
\le\frac1{H_T}\sum_{t=1}^T
\eta_t\norm{u_t-u_t^{(i)}}^2. 
\]
Taking average over $i $ implies that  
\begin{align*}
\E\Big[{1\over n }\sum_{i=1}^n\norm{\bar u_T-\bar u_T^{(i)}}^2\Big]
& \le\frac1{H_T}\sum_{t=1}^T
\eta_t \E\Bigl[{1\over n}\sum_{i=1}^n\norm{u_t-u_t^{(i)}}^2 \Bigr]\\
& = \frac1{H_T}\sum_{t=1}^T
\eta_t \Delta_t \le \frac1{H_T}\sum_{t=1}^T
\eta_t \Big[C_n^2R^2e^{t/n^2}\sum_{k=1}^{t}\eta_k^2\Big]
\end{align*}
where the  last inequality, we have used \eqref{eq:stabilityexplicit}. Consequently, we have 
\begin{align*}
\Big[\epsilon_2(\bar u_T) \Big]^2 & = \E\Big[{1\over n } \sum_{i=1}^n
\norm{\bar u_T-\bar u_T^{(i)}}^2\Big]  \le \frac1{H_T}\sum_{t=1}^T
\eta_t \Big[C_n^2R^2e^{t/n^2}\sum_{k=1}^{t}\eta_k^2\Big]\\
& \le \frac1{H_T}\sum_{t=1}^T
\eta_t \Big[C_n^2R^2e^{T/n^2}\sum_{k=1}^{T}\eta_k^2\Big] 
= C_n^2R^2e^{T/n^2}\sum_{k=1}^{T}\eta_k^2. 
\end{align*}
This completes the proof for \eqref{eq:barstability}. 

Finally, \eqref{eq:rankone} yields
\[
\left\|W_t(S)-W_t(S^{(i)})\right\|_F^2
=\sum_{j=1}^m\frac{a_j^2}{m}
\norm{u_t(S)-u_t(S^{(i)})}^2,
\]
which is \eqref{eq:isometry}.
\end{proof}

\section{Stability-based generalization and excess-risk bounds}
Define the induced empirical and population risks
\[
\cF_S(u)=\frac1n\sum_{i=1}^nF_{z_i}(u),
\qquad
\cF(u)=\E_zF_z(u).
\]

\begin{lemma}[On-average stability implies expected generalization]\label{lem:stabilitygen}
Let $A$ be any randomized algorithm with output in $\R^d$. If every $F_z$ is $R$-Lipschitz, then
\begin{equation}\label{eq:genfromstability}
\left|\E\big[\cF(A(S))-\cF_S(A(S))\big]\right|
\le R\epsilon_2(A).
\end{equation}
\end{lemma}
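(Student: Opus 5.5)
The argument is the standard symmetrization (replace-one) argument for on-average stability, as in Lei and Ying \cite{lei2020}. First rewrite the population risk of $A(S)$ using the independent copy $\widetilde S$. Since $\widetilde z_i$ is independent of $S$ and has the law of a fresh example,
\[
\E\big[\cF(A(S))\big]=\frac1n\sum_{i=1}^n\E\big[F_{\widetilde z_i}(A(S))\big].
\]
Next, by exchangeability, the pair $(S^{(i)},z_i)$ has the same joint law as $(S,\widetilde z_i)$: swapping $z_i$ and $\widetilde z_i$ is a measure-preserving map. The algorithm's internal randomness, namely the index sequence $(I_t)$, is shared and independent of the data, so this swap does not affect it. Hence
\[
\E\big[F_{\widetilde z_i}(A(S))\big]=\E\big[F_{z_i}(A(S^{(i)}))\big].
\]
Averaging over $i$ then gives
\[
\E\big[\cF(A(S))-\cF_S(A(S))\big]=\frac1n\sum_{i=1}^n\E\big[F_{z_i}(A(S^{(i)}))-F_{z_i}(A(S))\big].
\]

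For each $i$, the $R$-Lipschitz property of $F_{z_i}$, which is immediate from $\norm{x}\le R$, bounds the integrand in absolute value by $R\norm{A(S^{(i)})-A(S)}$. Taking absolute values inside the expectation, we therefore obtain the bound
\[
R\,\E\Big[\frac1n\sum_{i=1}^n\norm{A(S)-A(S^{(i)})}\Big].
\]
Jensen's inequality, or equivalently Cauchy--Schwarz applied jointly over the uniform average in $i$ and the expectation, bounds this first moment by the square root of the second moment. That square root is exactly $\epsilon_2(A)$ in \cref{def:oams}, which yields \eqref{eq:genfromstability}.

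The only delicate step is the exchangeability identity. It needs care in specifying what is random: $S$ and $\widetilde S$ are i.i.d., and the algorithm's randomness is coupled identically across the two runs and independent of the data. Both facts hold in the coupled setting of \cref{sec:stability}. The Lipschitz and Jensen steps are routine.
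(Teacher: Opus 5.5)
Your proposal is correct and follows the paper's proof essentially step for step: rewrite the population risk using the independent copies $\widetilde z_i$, apply the exchangeability swap of $z_i$ and $\widetilde z_i$ (with the algorithm's randomness shared and data-independent), then use the $R$-Lipschitz bound and Cauchy--Schwarz over $i$ and the expectation. The only cosmetic difference is that you obtain the absolute-value form by moving the absolute value inside the expectation, whereas the paper appeals to reversing the swap; both are valid.
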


\begin{proof}
For every $i$, exchangeability under swapping $z_i$ and $\widetilde z_i$ gives
\[
\E F_{\widetilde z_i}(A(S))
=\E F_{z_i}(A(S^{(i)})).
\]
Since an independent test observation has the same law as $\widetilde z_i$,
\[
\E[\cF(A(S))-\cF_S(A(S))]
=\frac1n\sum_{i=1}^n
\E\!\left[F_{z_i}(A(S^{(i)}))-F_{z_i}(A(S))\right].
\]
Lipschitz continuity, followed by Cauchy--Schwarz over $i$ and expectation, proves \eqref{eq:genfromstability}. Reversing the swap gives the absolute-value form.
\end{proof}

\begin{theorem}[Explicit induced-risk generalization bound]\label{thm:generalization}
For unprojected hinge-loss identity STE and the weighted output \eqref{eq:averageiterate},
\begin{equation}\label{eq:explicitgen}
\left|\E\big[\cF(\bar u_T)-\cF_S(\bar u_T)\big]\right|
\le C_nR^2 e^{T/(2n^2)}
\left(\sum_{t=1}^T\eta_t^2\right)^{1/2}.
\end{equation}
\end{theorem}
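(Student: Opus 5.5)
The bound \eqref{eq:explicitgen} should follow by chaining \cref{lem:stabilitygen} with the averaged-iterate stability estimate \eqref{eq:barstability} of \cref{thm:stability}. We take the randomized algorithm $A$ to be the map $S\mapsto\bar u_T(S)$. Its randomness is the shared index sequence $(I_t)$, used identically for $S$ and each $S^{(i)}$, exactly as in the coupled setting of \cref{sec:stability}. The output lies in $\R^d$, as \cref{lem:stabilitygen} requires.

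\textbf{Step 1: Lipschitz property.} For $z=(x,y)$ with $\norm{x}\le R$, the map $u\mapsto(-yu^\top x)_+$ is the composition of the $1$-Lipschitz function $(\cdot)_+$ with the linear functional $u\mapsto -yu^\top x$, whose norm is $\norm{x}\le R$. Hence each $F_z$ is $R$-Lipschitz, and \cref{lem:stabilitygen} applies. This gives
\[
\left|\E\big[\cF(\bar u_T)-\cF_S(\bar u_T)\big]\right|\le R\,\epsilon_2(\bar u_T).
\]

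\textbf{Step 2: consistency of the exchangeability step.} The stability quantity $\epsilon_2$ in \cref{def:oams} must be computed under the same joint law of $(S,\widetilde S,(I_t))$ that \cref{lem:stabilitygen} uses. There, swapping $z_i$ and $\widetilde z_i$ with the index sequence held fixed preserves the joint distribution, because the indices are independent of the data. So the identity $\E F_{\widetilde z_i}(A(S))=\E F_{z_i}(A(S^{(i)}))$ holds for the coupled runs. I expect this to be the only point needing care, and it is routine: the algorithm's randomness is independent of the sample and is treated as fixed while exchanging.

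\textbf{Step 3: substitution.} Insert \eqref{eq:barstability}, namely $\epsilon_2(\bar u_T)\le C_nRe^{T/(2n^2)}\big(\sum_{t=1}^T\eta_t^2\big)^{1/2}$. Multiplying by $R$ gives exactly \eqref{eq:explicitgen}. No step here is a genuine obstacle: the substantive work, the approximate non-expansiveness of \cref{lem:map} and the stability recursion, is already contained in \cref{thm:stability}.
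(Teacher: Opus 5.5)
Your proposal is correct and matches the paper's proof, which likewise applies \cref{lem:stabilitygen} to $A(S)=\bar u_T(S)$ and then substitutes \eqref{eq:barstability}. Your explicit checks, that each $F_z$ is $R$-Lipschitz and that swapping $z_i$ with $\widetilde z_i$ is compatible with the shared, data-independent index sequence, are exactly the hypotheses the paper uses implicitly.
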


\begin{proof}
Apply \cref{lem:stabilitygen} and then \eqref{eq:barstability}.
\end{proof}

\begin{theorem}[Explicit excess induced-risk bound]\label{thm:excess}
For every fixed comparator $v\in\R^d$, unprojected hinge-loss identity STE satisfies
\begin{equation}\label{eq:excess}
\E\cF(\bar u_T)-\cF(v)
\le
\frac{\norm{v}^2+R^2Q_T}{2H_T}
+C_nR^2e^{T/(2n^2)}\sqrt{Q_T},
\end{equation}
where $H_T=\sum_{t=1}^T\eta_t$ and $Q_T=\sum_{t=1}^T\eta_t^2$.
\end{theorem}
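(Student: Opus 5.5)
The plan is to split the excess induced risk into a generalization term and an optimization term:
\[
\E\cF(\bar u_T)-\cF(v)
=\E\big[\cF(\bar u_T)-\cF_S(\bar u_T)\big]
+\E\big[\cF_S(\bar u_T)-\cF_S(v)\big],
\]
where we use $\E\cF_S(v)=\cF(v)$ because $v$ is fixed and does not depend on $S$. The first term is controlled directly by \cref{thm:generalization}, which gives the contribution $C_nR^2e^{T/(2n^2)}\sqrt{Q_T}$. What remains is to show
\[
\E\big[\cF_S(\bar u_T)-\cF_S(v)\big]\le\frac{\norm{v}^2+R^2Q_T}{2H_T}.
\]
This is a standard optimization bound for stochastic subgradient descent on the convex empirical objective $\cF_S$.

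For the optimization bound, start from the latent recursion $u_{t+1}=u_t-\eta_tg_t$ with $g_t\in\partial F_{z_{I_t}}(u_t)$, as given by \cref{thm:convex}. Expanding the square gives
\[
\norm{u_{t+1}-v}^2=\norm{u_t-v}^2-2\eta_t\langle g_t,u_t-v\rangle+\eta_t^2\norm{g_t}^2.
\]
Two facts are then applied:
\begin{itemize}
\item $\norm{g_t}\le\norm{x_{I_t}}\le R$, which bounds the last term by $\eta_t^2R^2$;
\item convexity of $F_{z_{I_t}}$ gives $\langle g_t,u_t-v\rangle\ge F_{z_{I_t}}(u_t)-F_{z_{I_t}}(v)$.
\end{itemize}
Next, take conditional expectation over $I_t$ given $S$ and the past indices $I_1,\dots,I_{t-1}$. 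Since $u_t$ is measurable with respect to this past and $I_t$ is uniform on $[n]$, we get $\E_{I_t}[F_{z_{I_t}}(u_t)-F_{z_{I_t}}(v)]=\cF_S(u_t)-\cF_S(v)$. Rearranging yields
\[
2\eta_t\E\big[\cF_S(u_t)-\cF_S(v)\big]\le\E\norm{u_t-v}^2-\E\norm{u_{t+1}-v}^2+\eta_t^2R^2.
\]
Summing from $t=1$ to $T$, the distance terms telescope. Using $u_1=0$ and dropping the final nonnegative term leaves
\[
2\sum_{t=1}^T\eta_t\E\big[\cF_S(u_t)-\cF_S(v)\big]\le\norm{v}^2+R^2Q_T.
\]

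To pass from the iterates to $\bar u_T$, use Jensen's inequality: $\bar u_T$ is a convex combination of the $u_t$ with weights $\eta_t/H_T$ and $\cF_S$ is convex, so $\cF_S(\bar u_T)\le H_T^{-1}\sum_t\eta_t\cF_S(u_t)$. Dividing the summed inequality by $2H_T$ then gives the claimed optimization bound, and adding the generalization term from \cref{thm:generalization} completes the proof of \eqref{eq:excess}.

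The step needing the most care is the conditioning argument. It requires that $I_t$ be independent of $u_t$ and $S$, which holds because $u_t$ depends only on $S$ and $I_1,\dots,I_{t-1}$. It also requires that the STE's particular subgradient selection at ties, $g_t=-y_tx_t$, is still a valid subgradient, and \cref{thm:convex} confirms this. Everything else is routine.
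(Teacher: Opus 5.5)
Your proposal is correct and follows essentially the same approach as the paper's proof. Both use the decomposition with $\E\cF_S(v)=\cF(v)$, the one-step subgradient inequality with $\norm{g_t}\le R$, conditioning on $S$ and the past indices, telescoping from $u_1=0$, Jensen's inequality for the $\eta_t$-weighted average, and \cref{thm:generalization} for the generalization gap.
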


\begin{proof}
For any $v\in\R^d$, the unprojected update identity and the subgradient inequality imply
\begin{align*}
\norm{u_{t+1}-v}^2
&=\norm{u_t-v-\eta_tg_t}^2\\
&= \norm{u_t-v}^2 -2\eta_t \langle u_t-v, g_t \rangle +  \eta_t^2 \|g_t\|^2\\
&\le\norm{u_t-v}^2
-2\eta_t\big(F_{z_{I_t}}(u_t)-F_{z_{I_t}}(v)\big)
+\eta_t^2R^2.
\end{align*}
where we have used the convexity of $F_{z_{I_t}}(\cdot)$, i.e., $F_{z_{I_t}}(v)-F_{z_{I_t}}(u_t) \ge \langle g_t, v-u_t \rangle.$
Conditioning on $S$ and the past replaces the stochastic loss by $\cF_S$.
Summing, using $u_1=0$, and applying convexity to \eqref{eq:averageiterate} gives
\[
\E[\cF_S(\bar u_T)-\cF_S(v)]
\le\frac{\norm{v}^2+R^2Q_T}{2H_T}.
\]
Add and subtract empirical risks. Since $v$ is fixed,
$\E\cF_S(v)=\cF(v)$, while \cref{thm:generalization} controls the output's generalization gap. Combining the two bounds proves \eqref{eq:excess}.
\end{proof}

\begin{corollary}[An explicit $n^{-1/2}$ bound]\label{cor:rate}
Fix a nonzero comparator $v\in\R^d$ and use the constant step size
\[
\eta=\frac{\norm{v}}{R T^{3/4}}.
\]
Then
\begin{equation}\label{eq:rateT}
\E\cF(\bar u_T)-\cF(v)
\le
\frac{\norm{v}R}{2T^{1/4}}
+\frac{\norm{v}R}{2T^{3/4}}
+\frac{C_n\norm{v}R}{T^{1/4}}e^{T/(2n^2)}.
\end{equation}
In particular, with $T=n^2$,
\begin{equation}\label{eq:raten}
\E\cF(\bar u_{n^2})-\cF(v)
\le
\frac{\norm{v}R}{\sqrt n}
\left(\frac12+e^{1/2}C_n\right)
+\frac{\norm{v}R}{2n^{3/2}}.
\end{equation}
\end{corollary}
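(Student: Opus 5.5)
The plan is to specialize \cref{thm:excess} to a constant step size and then simplify the three resulting terms. With $\eta_t\equiv\eta=\norm{v}/(RT^{3/4})$ we get
\[
H_T=T\eta=\frac{\norm{v}T^{1/4}}{R},\qquad Q_T=T\eta^2=\frac{\norm{v}^2}{R^2T^{1/2}}.
\]
These go into the right-hand side of \eqref{eq:excess}. The optimization term then splits as
\[
\frac{\norm{v}^2}{2H_T}=\frac{\norm{v}R}{2T^{1/4}},\qquad
\frac{R^2Q_T}{2H_T}=\frac{R^2\,\norm{v}^2/(R^2T^{1/2})}{2\norm{v}T^{1/4}/R}=\frac{\norm{v}R}{2T^{3/4}}.
\]
The stability term is
\[
C_nR^2e^{T/(2n^2)}\sqrt{Q_T}=C_nR^2e^{T/(2n^2)}\frac{\norm{v}}{RT^{1/4}}=\frac{C_n\norm{v}R}{T^{1/4}}e^{T/(2n^2)}.
\]
Adding the three pieces gives \eqref{eq:rateT} exactly. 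The comparator must be nonzero because otherwise $\eta=0$ and $H_T=0$, so the bound would be undefined.

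For \eqref{eq:raten}, substitute $T=n^2$, which gives $T^{1/4}=\sqrt n$, $T^{3/4}=n^{3/2}$, and $e^{T/(2n^2)}=e^{1/2}$. The first and third terms of \eqref{eq:rateT} become $\norm{v}R/(2\sqrt n)$ and $e^{1/2}C_n\norm{v}R/\sqrt n$, and together they factor as $\frac{\norm{v}R}{\sqrt n}\bigl(\frac12+e^{1/2}C_n\bigr)$. The middle term becomes $\norm{v}R/(2n^{3/2})$.

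No step is a real obstacle, since everything follows from \cref{thm:excess} by bookkeeping. The main place to be careful is the exponent arithmetic in $R^2Q_T/(2H_T)$, where $T^{-1/2}\cdot T^{-1/4}$ gives $T^{-3/4}$. It is also worth observing that $\eta$ was chosen so that the leading optimization term and the stability term both scale as $T^{-1/4}$. Taking $T=n^2$ keeps the factor $e^{T/(2n^2)}$ bounded by a constant while turning $T^{-1/4}$ into $n^{-1/2}$.
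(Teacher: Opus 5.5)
Your proposal is correct and is the paper's argument: substitute $H_T=T\eta$ and $Q_T=T\eta^2$ with $\eta=\norm{v}/(RT^{3/4})$ into \eqref{eq:excess}, simplify the three terms, and then set $T=n^2$ so that $T^{1/4}=\sqrt n$ and $e^{T/(2n^2)}=e^{1/2}$. Your computation of each term, including the $T^{-3/4}$ exponent in $R^2Q_T/(2H_T)$, checks out.
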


\begin{proof}
For a constant step size, $H_T=T\eta$ and $Q_T=T\eta^2$. Substituting these identities and $\eta=\norm{v}/(RT^{3/4})$ into \eqref{eq:excess} gives \eqref{eq:rateT}. Setting $T=n^2$ then yields \eqref{eq:raten}.
\end{proof}

\begin{remark}
The $T=n^2$, $\eta\asymp T^{-3/4}$ balance and the resulting $O(n^{-1/2})$ rate are the $\alpha=0$ specialization of Lei and Ying \cite{lei2020}. Our contribution is not a new generic stability rate, but the proof that the STE recursion is exactly stochastic subgradient descent on $F_z$, the sharper two-slope map identity, and the isometric transfer of the resulting model-stability guarantee to $W$.
\end{remark}

\section{Excess misclassification error under separability}\label{sec:classification}
The induced-risk result does not itself imply classification calibration. To obtain a direct excess-misclassification guarantee, we now impose the assumption used only in this section and in \cref{prop:instability}.

\begin{assumption}[Large-margin separability]\label{ass:separable}
There are a unit vector $u_\star\in\R^d$ and $\gamma>0$ such that
$y u_\star^\top x\ge\gamma$ almost surely. Also $\norm{x}\le R$ almost surely.
\end{assumption}

Under \cref{ass:separable}, the exact identity-STE/perceptron reduction gives the desired classification result. The argument uses a one-pass online sequence rather than the repeated-index stability experiment of \cref{sec:stability}.

Let $z_1,\ldots,z_n$ be i.i.d.\ and run saturated hinge-loss identity STE once through the sample with a constant step size $\eta>0$:
\begin{equation}\label{eq:onepass}
u_{t+1}=u_t+\eta y_tx_t\1_{\{y_tu_t^\top x_t\le0\}},
\qquad u_1=0.
\end{equation}
Define
\[
\mathcal R_{01}(u)=\Pr\{yu^\top x\le0\},
\qquad
\mathcal R_{01}^*=\inf_{v\in\R^d}\mathcal R_{01}(v),
\]
and, for a classifier $h:\R^d\to\{-1,0,+1\}$, write
$\mathcal R_{01}(h)=\Pr\{h(x)\ne y\}$, with ties counted as errors.
Under \cref{ass:separable}, $\mathcal R_{01}(u_\star)=0$ and hence $\mathcal R_{01}^*=0$. Therefore, the excess misclassification error coincides with the misclassification error itself.

\begin{theorem}[Optimal-order one-pass STE misclassification error]\label{thm:classification}
Suppose \cref{ass:separable,ass:saturated} hold. Let $\tau$ be uniform on $\{1,\ldots,n\}$ and independent of the training sample, and define the randomized STE classifier
\[
\widehat h_{\rm G}(x)=\sign(u_\tau^\top x).
\]
Then
\begin{equation}\label{eq:gibbsclass}
\E\mathcal R_{01}(\widehat h_{\rm G})
=\E\mathcal R_{01}(u_\tau)
\le \frac{R^2}{\gamma^2n}.
\end{equation}
Moreover, the majority-vote classifier
\[
\widehat h_{\rm MV}(x)
=\sign\!\left(\sum_{t=1}^n\sign(u_t^\top x)\right)
\]
satisfies
\begin{equation}\label{eq:majorityclass}
\E\mathcal R_{01}(\widehat h_{\rm MV})
\le \frac{2R^2}{\gamma^2n}.
\end{equation}
\end{theorem}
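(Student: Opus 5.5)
The plan is the classical online-to-batch conversion of the perceptron mistake bound. By \cref{thm:convex} and Proposition~\ref{prop:reduction}, under \cref{ass:saturated} the one-pass STE iterates \eqref{eq:onepass} are exactly perceptron iterates with step $\eta$. Moreover, $\sign(f_{W_t}(x))=\sign(u_t^\top x)$, so the network classifier and the latent classifier coincide and ties are counted as errors in both. Since the perceptron update is scale-invariant ($u_t/\eta$ follows the step-one perceptron), the value of $\eta$ is irrelevant. Write $M_t=\1_{\{y_tu_t^\top x_t\le0\}}$ and $M=\sum_{t=1}^n M_t$ for the number of updates.

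\emph{Step 1 (deterministic mistake bound).} Using \cref{ass:separable}, on every update step we have $u_{t+1}^\top u_\star\ge u_t^\top u_\star+\eta\gamma$. We also have
\[
\norm{u_{t+1}}^2=\norm{u_t}^2+2\eta y_tu_t^\top x_t+\eta^2\norm{x_t}^2\le\norm{u_t}^2+\eta^2R^2,
\]
where the key point is that the cross term is $\le0$ precisely because updates occur only when $y_tu_t^\top x_t\le0$. This tie convention is what makes the inequality go through. Hence $\eta\gamma M\le u_{n+1}^\top u_\star\le\norm{u_{n+1}}\le\eta R\sqrt M$, which gives $M\le R^2/\gamma^2$ almost surely, uniformly in $n$.

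\emph{Step 2 (online-to-batch).} The iterate $u_t$ depends only on $z_1,\dots,z_{t-1}$, and $z_t$ is independent of it. Therefore $\E[M_t\mid z_1,\dots,z_{t-1}]=\mathcal R_{01}(u_t)$. Summing and taking expectations gives
\[
\E\mathcal R_{01}(u_\tau)=\frac1n\sum_{t=1}^n\E\mathcal R_{01}(u_t)=\frac{\E M}{n}\le\frac{R^2}{\gamma^2n},
\]
using the independence of $\tau$. The equality $\mathcal R_{01}(\widehat h_{\rm G})=\mathcal R_{01}(u_\tau)$ holds because the error event $\{\sign(u^\top x)\ne y\}$ equals $\{yu^\top x\le0\}$ with ties counted as errors. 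This yields \eqref{eq:gibbsclass}, and since $\mathcal R^*_{01}=0$ it is also the excess error.

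\emph{Step 3 (majority vote).} Fix $(x,y)$. If $\widehat h_{\rm MV}(x)\ne y$, including the tie case where the vote sum is $0$, then $\sum_t y\,\sign(u_t^\top x)\le0$. Each term is $+1$ if $u_t$ classifies correctly, and $\le0$ (that is, $-1$ or $0$) if it errs. So at least half of the $n$ iterates err on $(x,y)$, i.e. $\1\{\widehat h_{\rm MV}\text{ errs}\}\le \frac2n\sum_t\1\{yu_t^\top x\le0\}$. Taking expectations over a fresh $(x,y)$ and the sample, and applying Step 2, gives \eqref{eq:majorityclass}.

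I expect no serious obstacle. The only delicate points are the bookkeeping of ties and the measurability and independence claim in Step 2. Ties must be counted as both updates and errors, consistent with $\beta_t$ in \cref{thm:convex}. For Step 2 one must note that $u_t$ is $\sigma(z_1,\dots,z_{t-1})$-measurable, which holds because the one-pass recursion uses each example once.
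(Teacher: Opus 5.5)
Your proposal is correct and follows the paper's proof essentially step for step. It uses the perceptron mistake bound $M_n\le R^2/\gamma^2$ from the $u_\star$-projection and norm-growth arguments, online-to-batch conversion via the independence of $u_t$ from $z_t$, and the pointwise ``at least half of the iterates err'' bound for the majority vote. Your explicit handling of ties is slightly more careful than the paper's text but does not change the argument; this covers the cross term in the norm growth, the equality of the Gibbs error events, and the zero vote sum.
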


\begin{proof}
Let
\[
M_n=\sum_{t=1}^n\1_{\{y_tu_t^\top x_t\le0\}}
\]
be the number of updates. On every update, separability gives an increase of at least $\eta\gamma$ in the component along $u_\star$, so
\[
u_{n+1}^\top u_\star\ge\eta\gamma M_n.
\]
Since an update occurs only when $y_tu_t^\top x_t\le0$,
\[
\norm{u_{t+1}}^2
=\norm{u_t}^2+2\eta y_tu_t^\top x_t+\eta^2\norm{x_t}^2
\le\norm{u_t}^2+\eta^2R^2.
\]
Summing only over update rounds yields $\norm{u_{n+1}}^2\le\eta^2R^2M_n$. Hence
\[
\eta\gamma M_n
\le u_{n+1}^\top u_\star
\le\norm{u_{n+1}}
\le\eta R\sqrt{M_n},
\]
and therefore $M_n\le R^2/\gamma^2$.

Because $u_t$ depends only on $z_1,\ldots,z_{t-1}$, it is independent of the fresh observation $z_t$. Thus
\[
\E\mathcal R_{01}(u_t)
=\E\1_{\{y_tu_t^\top x_t\le0\}}.
\]
Averaging the preceding identity over the independent uniform index $\tau$ proves \eqref{eq:gibbsclass}. Finally, if the majority vote is wrong, at least half of the constituent classifiers are wrong. Pointwise,
\[
\1_{\{\widehat h_{\rm MV}(x)\ne y\}}
\le\frac2n\sum_{t=1}^n
\1_{\{\sign(u_t^\top x)\ne y\}}.
\]
Taking expectations and using \eqref{eq:gibbsclass} proves \eqref{eq:majorityclass}.
\end{proof}

\begin{remark}
The mistake bound and online-to-batch conversion are classical in the literture of online machine learning \cite{cesa2006prediction,cesa2004generalization}. Their role here is to show that saturated hinge-loss identity STE generates exactly the perceptron iterates to which this argument applies. The theorem concerns a randomized iterate or a majority-vote ensemble; it does not automatically give the same bound for the final iterate or the averaged vector $\bar u_T$.
\end{remark}

\begin{remark}[Optimality of the margin rate]\label{rem:marginoptimal}
The order $R^2/(\gamma^2n)$ is minimax optimal, up to universal constants, over the realizable large-margin class in the nontrivial regime $n\gtrsim R^2/\gamma^2$ and $d\gtrsim R^2/\gamma^2$.  Hanneke and Kontorovich \cite{hanneke2021} established the corresponding optimal SVM margin bound, removing the logarithmic factor in earlier SVM bounds.  More generally, the effective margin dimension is
\[
k=\min\!\left\{d,\left\lfloor\frac{R^2}{\gamma^2}\right\rfloor\right\}.
\]
Indeed, $k$ orthogonal points $x_j=Re_j$ can be assigned arbitrary binary labels while remaining separable by a unit vector with margin at least $R/\sqrt{k}\ge\gamma$.  Hence the class contains a shattered subclass of size $k$, and the standard realizable-classification lower bound gives expected error of order $k/n$ for some data distribution.  Consequently, when $d\ge R^2/\gamma^2$, no learner can improve the order $R^2/(\gamma^2n)$ uniformly over all distributions satisfying \cref{ass:separable}.  Thus the STE bound in \cref{thm:classification} matches the optimal SVM margin rate in its dependence on $R$, $\gamma$, and $n$, although the algorithms and analyses are different.  The claim is an order-optimal worst-case statement; particular distributions may admit faster rates.
\end{remark}

\section*{Acknowledgment}  

The work of Yiming Ying is supported by Australian Research Council Discovery Project (DP250101359).   

\appendix

\section{Uniform zero-one stability can fail}
\begin{proposition}[Uniform $0$--$1$ instability]\label{prop:instability}
For every $n\ge1$ and every output layer satisfying \cref{ass:saturated}, there are neighboring samples $S,S'$ of size $n$, both separated by the same unit vector with margin at least $1/\sqrt2$, such that the sequential hinge-STE classifiers after $n$ updates satisfy
\[
\sup_{(x,y)}
\left|
\1_{\{\sign(u_{n+1}(S)^\top x)\ne y\}}
-\1_{\{\sign(u_{n+1}(S')^\top x)\ne y\}}
\right|=1.
\]
\end{proposition}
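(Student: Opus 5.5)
The plan is to give an explicit two-dimensional construction. By \cref{thm:convex}, under \cref{ass:saturated} the sequential hinge-STE recursion on the latent vector is exactly the perceptron recursion \eqref{eq:onepass}, $u_{t+1}=u_t+\eta y_tx_t\1_{\{y_tu_t^\top x_t\le0\}}$ with $u_1=0$. By \cref{prop:reduction}, the network classifier satisfies $\sign(f_{W_{n+1}}(x))=\sign(u_{n+1}^\top x)$. So it suffices to exhibit neighboring samples whose final latent vectors classify some test point differently. The particular output layer only enters through \cref{ass:saturated}, so the same construction works for every admissible output layer and every step size $\eta>0$.

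\textbf{Construction.} Take $d=2$, $u_\star=(e_1+e_2)/\sqrt2$, and all labels $+1$. The points $e_1$ and $e_2$ both have margin $u_\star^\top e_j=1/\sqrt2$, so any sample built from them is separated by $u_\star$ with margin $1/\sqrt2$ (with $R\ge1$). Define
\[
S=\bigl((e_1,+1),(e_1,+1),\ldots,(e_1,+1)\bigr),\qquad S'=\bigl((e_2,+1),(e_1,+1),\ldots,(e_1,+1)\bigr),
\]
which differ only in the first example.

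\textbf{Tracing the runs.} On $S$ the first step is a tie ($u_1=0$), so it updates to $u_2=\eta e_1$. Every later margin equals $\eta>0$, so there are no further updates and $u_{n+1}(S)=\eta e_1$.

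On $S'$ the first step gives $u_2=\eta e_2$. If $n\ge2$, the second example has margin $u_2^\top e_1=0$, which is a tie, so it updates to $u_3=\eta(e_1+e_2)$. All later margins equal $\eta>0$, so $u_{n+1}(S')=\eta(e_1+e_2)$. If $n=1$, then simply $u_2(S')=\eta e_2$.

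\textbf{Separating test point.} The supremum is over all $(x,y)$, so the test point need not be separable. For $n\ge2$ take $x=e_1-2e_2$ and $y=+1$. Then $u_{n+1}(S)^\top x=\eta>0$, which is correct, while $u_{n+1}(S')^\top x=-\eta<0$, which is wrong. For $n=1$ take $x=e_1-e_2$, which gives $\eta$ versus $-\eta$. In both cases the indicator difference is $1$, and since indicators lie in $\{0,1\}$, the supremum equals exactly $1$.

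The main point requiring care is the tie convention: the STE update fires at $y u^\top x=0$ (endpoint subgradient in \cref{thm:convex}). This is what makes the second step of the $S'$ run update, so I would verify it explicitly against \eqref{eq:beta} using $f_W(x)=0<\rho$ at a tie. Otherwise the argument is routine bookkeeping.
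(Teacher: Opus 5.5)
Your proof is correct and is essentially the paper's construction: the same $\R^2$ setting, all labels $+1$, first examples $e_1$ versus $e_2$, and separator $(e_1+e_2)/\sqrt2$. The one difference is the common points. The paper uses $c=(e_1+e_2)/\sqrt2$, which has strictly positive latent margin in both runs, so each run updates once, the final vectors are $\eta_1e_1$ and $\eta_1e_2$, and the test point $(e_1-e_2)/\sqrt2$ works for any step-size schedule. Your choice of $e_1$ forces a tie-triggered second update on $S'$, which you verify correctly, but this makes your test point $e_1-2e_2$ rely on the constant step size of \eqref{eq:onepass}. Under a general schedule you would need something like $x=\eta_1e_1-(\eta_1+\eta_2)e_2$, whereas the paper's version depends only on $\eta_1$.
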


\begin{proof}
In $\R^2$, take the first observation as $(e_1,+1)$ in $S$ and $(e_2,+1)$ in $S'$. Let every remaining observation be $(c,+1)$, where $c=(e_1+e_2)/\sqrt2$. Both samples are separated by $c$ with margin at least $1/\sqrt2$. From zero initialization, the selected hinge subgradient gives $u_2=\eta_1e_1$ and $u'_2=\eta_1e_2$. Every common observation then has strictly positive latent margin in both runs, so all subsequent hinge-STE updates vanish. On the unit test vector $x=(e_1-e_2)/\sqrt2$, the final classifiers have opposite signs. Choosing the test label to agree with either one makes the two $0$--$1$ losses differ by one.
\end{proof}

The proposition does not contradict on-average model stability. Model stability controls an average Euclidean parameter perturbation, whereas the supremum in uniform $0$--$1$ stability may choose a test point exactly across the decision boundary.


\small
\begin{thebibliography}{99}
\bibitem{ajanthan2021}
T.~Ajanthan, K.~Gupta, P.~H.~S.~Torr, R.~Hartley, and R.~K.~Dokania.
Mirror descent view for neural network quantization.
In \emph{Proceedings of AISTATS}, volume 130, pages 2809--2817, 2021.

\bibitem{bai2019}
Y.~Bai, Y.-X.~Wang, and E.~Liberty.
ProxQuant: Quantized neural networks via proximal operators.
In \emph{Proceedings of ICLR}, 2019.


\bibitem{cesa2006prediction}
N.~Cesa-Bianchi and G.~Lugosi.
\newblock {\em Prediction, Learning, and Games}.
\newblock Cambridge University Press, 2006.


\bibitem{cesa2004generalization}
N.~Cesa-Bianchi, A.~Conconi, and C.~Gentile.
\newblock On the generalization ability of on-line learning algorithms.
\newblock {\em IEEE Transactions on Information Theory}, 50(9):2050--2057, 2004.


\bibitem{bengio2013}
Y.~Bengio, N.~L\'eonard, and A.~Courville.
Estimating or propagating gradients through stochastic neurons for conditional computation.
\emph{arXiv:1308.3432}, 2013.


\bibitem{bousquet2004introduction}
O.~Bousquet, S.~Boucheron, and G.~Lugosi.
\newblock Introduction to statistical learning theory.
\newblock In O.~Bousquet, U.~von Luxburg, and G.~R\"atsch, editors,
\emph{Advanced Lectures on Machine Learning},
Lecture Notes in Artificial Intelligence, volume 3176,
pages 169--207. Springer, 2004.

\bibitem{bousquet2002stability}
O.~Bousquet and A.~Elisseeff.
\newblock Stability and generalization.
\newblock \emph{Journal of Machine Learning Research},
  2:499--526, 2002.

\bibitem{courbariaux2016binarynet}
M. Courbariaux, I. Hubara, D. Soudry, R. El-Yaniv, and Y. Bengio.
\newblock Binarized Neural Networks: Training Deep Neural Networks with Weights and Activations Constrained to +1 or -1.
\newblock {\em arXiv preprint arXiv:1602.02830}, 2016.
\newblock \url{https://arxiv.org/abs/1602.02830}.

\bibitem{dockhorn2021}
T.~Dockhorn, Y.~Yu, E.~Sari, M.~Zolnouri, and V.~Partovi Nia.
Demystifying and generalizing BinaryConnect.
In \emph{Advances in Neural Information Processing Systems}, volume 34, 2021.

\bibitem{hardt2016}
M.~Hardt, B.~Recht, and Y.~Singer.
Train faster, generalize better: Stability of stochastic gradient descent.
In \emph{Proceedings of ICML}, volume 48, pages 1225--1234, 2016.

\bibitem{hanneke2021}
S.~Hanneke and A.~Kontorovich.
Stable sample compression schemes: New applications and an optimal SVM margin bound.
In \emph{Proceedings of the 32nd International Conference on Algorithmic Learning Theory},
volume 132 of \emph{Proceedings of Machine Learning Research}, pages 697--721, 2021.

\bibitem{he2016deep}
K.~He, X.~Zhang, S.~Ren, and J.~Sun, 
"Deep residual learning for image recognition," 
in \emph{Proc. IEEE Conf. Comput. Vis. Pattern Recognit. (CVPR)}, 
2016, pp. 770--778.
    

\bibitem{jeong2026}
H.~Jeong, J.~Xin, and P.~Yin.
Beyond discreteness: Sample complexity analysis of straight-through estimator for 1-bit quantization.
\emph{arXiv:2505.18113}, 2026.

\bibitem{lecun2015deep}
Y.~LeCun, Y.~Bengio, and G.~Hinton.
Deep learning.
\emph{Nature}, 521:436--444, 2015.


\bibitem{krizhevsky2017imagenet}
A.~Krizhevsky, I.~Sutskever, and G.~E.~Hinton,
\emph{Imagenet classification with deep convolutional neural networks},
Communications of the ACM, \textbf{60} (2017), pp.~84--90.

\bibitem{lei2020}
Y.~Lei and Y.~Ying.
Fine-grained analysis of stability and generalization for stochastic gradient descent.
In \emph{Proceedings of ICML}, volume 119, pages 5809--5819, 2020.

\bibitem{lecun2015deep}
Y.~LeCun, Y.~Bengio, and G.~Hinton.
Deep learning.
\emph{Nature}, 521:436--444, 2015.
 

\bibitem{long2021}
Z.~Long, P.~Yin, and J.~Xin.
Learning quantized neural nets by coarse gradient method for nonlinear classification.
\emph{Research in the Mathematical Sciences}, 8:48, 2021.


\bibitem{mohri2018foundations}
M.~Mohri, A.~Rostamizadeh, and A.~Talwalkar.
\newblock \emph{Foundations of Machine Learning}.
\newblock MIT Press, Cambridge, MA, second edition, 2018.

\bibitem{rastegari2016xnornet}
M. Rastegari, V. Ordonez, J. Redmon, and A. Farhadi.
\newblock Xnor-net: Imagenet classification using binary convolutional neural networks.
\newblock In {\em Proceedings of the European Conference on Computer Vision (ECCV)}, pages 525--542. Springer, Cham, 2016.
\newblock \url{https://arxiv.org/abs/1603.05279}.

\bibitem{shekhovtsov2021}
A.~Shekhovtsov and V.~Yanush.
Reintroducing straight-through estimators as principled methods for stochastic binary networks.
\emph{arXiv:2006.06880}, 2021.

\bibitem{steinwart2008support}
I.~Steinwart and A.~Christmann.
\newblock \emph{Support Vector Machines}.
\newblock Springer, New York, 2008.


\bibitem{vapnik1998statistical}
V.~N. Vapnik.
\newblock \emph{Statistical Learning Theory}.
\newblock Wiley, New York, 1998.
 

\bibitem{vaswani2017attention}
A.~Vaswani, N.~Shazeer, N.~Parmar, J.~Uszkoreit, L.~Jones, A.~N.~Gomez, \L.~Kaiser, and I.~Polosukhin,
\emph{Attention is all you need},
Advances in Neural Information Processing Systems, \textbf{30} (2017), pp.~5998--6008.

\bibitem{vershynin2020memory}
R.~Vershynin. Memory capacity of neural networks with threshold and rectified linear unit activations,
\emph{SIAM Journal on Mathematics of Data Science},
vol.~2, no.~4, pp. 1004--1033, 2020.


\bibitem{wang2018training}
N.~Wang, J.~Choi, D.~Brand, C.-Y. Chen, and K.~Gopalakrishnan,
Training deep neural networks with 8-bit floating point numbers,
in \emph{Advances in Neural Information Processing Systems} (NeurIPS),
vol.~31, 2018.

\bibitem{xiong2017toward}
W.~Xiong, J.~Droppo, X.~Huang, F.~Seide, M.~Seltzer,
A.~Stolcke, D.~Yu, and G.~Zweig.
Toward human parity in conversational speech recognition.
\emph{IEEE/ACM Transactions on Audio, Speech, and Language Processing},
25(12):2410--2423, 2017.

\bibitem{yin2019}
P.~Yin, J.~Lyu, S.~Zhang, S.~Osher, Y.~Qi, and J.~Xin.
Understanding straight-through estimator in training activation quantized neural nets.
In \emph{Proceedings of ICLR}, 2019.


\bibitem{zhou2016dorefa}
S. Zhou, Y. Wu, Z. Ni, X. Zhou, H. Wen, and Y. Zou.
\newblock Dorefa-net: Training low bitwidth convolutional neural networks with low bitwidth gradients.
\newblock {\em arXiv preprint arXiv:1606.06160}, 2016.
\newblock \url{https://arxiv.org/abs/1606.06160}.

\bibitem{silver2016mastering}
D.~Silver, A.~Huang, C.~J.~Maddison, A.~Guez, L.~Sifre,
G.~van den Driessche, J.~Schrittwieser, I.~Antonoglou,
V.~Panneershelvam, M.~Lanctot, S.~Dieleman, D.~Grewe,
J.~Nham, N.~Kalchbrenner, I.~Sutskever, T.~Lillicrap,
M.~Leach, K.~Kavukcuoglu, T.~Graepel, and D.~Hassabis.
Mastering the game of Go with deep neural networks and tree search.
\emph{Nature}, 529:484--489, 2016.

 


\end{thebibliography}
\end{document}